\theoremstyle{definition}
\newtheorem{theorem}{Theorem}
\newtheorem{definition}[theorem]{Definition}
\newtheorem{lemma}[theorem]{Lemma}
\newtheorem{corollary}[theorem]{Corollary}
\newtheorem{proposition}[theorem]{Proposition}
\newtheorem{problem}[theorem]{Problem}
\newtheorem{example}[theorem]{Example}
\newtheorem{remark}[theorem]{Remark}
\def\saveenum{\xdef\@savedenum{\the\c@enumi\relax}}
\def\resetenum{\global\c@enumi\@savedenum}
\definecolor{darkblue}{rgb}{0.0, 0.0, 0.55}
\definecolor{cobalt}{rgb}{0.0, 0.28, 0.67}
\newcounter{todo}
\newcommand\listtodoname{List of todos}
\newcommand\listoftodos{%
  \section*{\listtodoname}\@starttoc{tod}}
\DeclareMathOperator{\ActThr}{ActThr}
\DeclareMathOperator{\defect}{defect}
\DeclareMathOperator{\Fun}{Fun}
\newcommand{\btheta}{{\boldsymbol{\theta}}}
\newcommand{\bw}{\mathbf{w}}
\newcommand{\bfd}{\mathbf{d}}
\newcommand{\bz}{\mathbf{z}} 
\newcommand{\upperboundn}[1]{6{#1}^2-6#1+1}
\newcommand{\edimLHS}{d_L+\sum_{i=0}^{L-1}(d_id_{i+1}-d_{i+1})}
\newcommand{\edimRHS}{d_L\binom{r^{L-1}+d_0-1}{d_0-1}}
\newcommand{\demph}[1]{\emph{{\color{RoyalBlue}#1}}}
\newcommand{\repeattheorem}[2]{%
  \begingroup
  \renewcommand{\thetheorem}{\ref{#1}}%
  \addtocounter{theorem}{-1}%
  \begin{theorem}
  #2
  \end{theorem}
  \endgroup
}
\let\saved@bibitem\@bibitem\makeatother
\let\@bibitem\saved@bibitem\makeatother
\newcommand{\qrw}[1]{q_{\mathbf{w}_{#1}}^r}
\newcommand{\qw}[1]{q_{\mathbf{w}_{#1}}}
\newcommand{\prw}[1]{p_{\mathbf{w}_{#1}}^r}
\newcommand{\pw}[1]{p_{\mathbf{w}_{#1}}}
\newcommand{\tildeprw}[1]{\tilde{p}_{\mathbf{w}_{#1}}^r}
\newcommand{\tildepw}[1]{\tilde{p}_{\mathbf{w}_{#1}}}
\renewcommand{\prw}[1]{Q_{({#1})}^r}
\renewcommand{\pw}[1]{Q_{({#1})}}
\renewcommand{\tildeprw}[1]{\tilde{Q}_{({#1})}^r}
\renewcommand{\tildepw}[1]{\tilde{Q}_{({#1})}}
\renewcommand{\qrw}[1]{Q_{({#1})}^r}
\renewcommand{\qw}[1]{Q_{({#1})}}
\newcommand{\wLone}[1]{w_{L_{1#1}}}
\newcommand{\tildewLone}[1]{w_{L_{1#1}}}
\renewcommand{\wLone}[1]{a_{({#1})}}
\renewcommand{\tildewLone}[1]{\tilde{a}_{({#1})}}
\DeclareMathOperator{\derivD}{d}
 \newcommand{\complexPsiMap}{\Psi_{{\mathbb{C},\mathbf{d},r}}} 
\newcommand{\sigmaCC}[1]{{\sigma_{#1}}}
\newcommand{\nNS}{n}
\newcommand{\kNS}{k}
\newcommand{\edim}[1]{\operatorname{edim}{#1}}
\renewcommand{\dim}[1]{\operatorname{dim}{#1}}
\title{Activation degree thresholds and expressiveness of polynomial neural~networks} 
\author{Bella Finkel\footnote{This material is based
upon work supported by the National Science Foundation Graduate Research Fellowship Program under Grant No.
2137424. Any opinions, findings, and conclusions or recommendations expressed in this material are those of the
authors and do not necessarily reflect the views of the National Science Foundation. Support was also provided by
the Graduate School and the Office of the Vice Chancellor for Research at the University of Wisconsin-Madison
with funding from the Wisconsin Alumni Research Foundation.}, Jose Israel Rodriguez\footnote{This research was partially supported by the Alfred P. Sloan Foundation.}, Chenxi Wu, Thomas Yahl}
\begin{document}

\maketitle

\begin{abstract}
We study the expressive power of deep polynomial neural networks through the geometry of their neurovariety. We introduce the notion of the activation degree threshold of a network architecture to express when the dimension of the neurovariety achieves its theoretical maximum. We prove the existence of the activation degree threshold for all polynomial neural networks without width-one bottlenecks and demonstrate a universal upper bound that is quadratic in the width of largest size. In doing so, we prove the high activation degree conjecture of Kileel, Trager, and Bruna. 
Certain structured architectures have exceptional activation degree thresholds, making them especially expressive in the sense of their neurovariety dimension.
In this direction, we prove that polynomial neural networks with equi-width~architectures are maximally expressive by showing their activation degree threshold is one.
\end{abstract}

\section*{Introduction}

Characterizing the functional space of artificial neural networks is a fundamental project in deep learning theory. When the functional space is semialgebraic, its geometric properties offer insight and mathematical intuition into the learning dynamics. Such a characterization is a two-way street: it is foundational to explaining empirical phenomena and offers the possibility of designing innovative architectures based on desirable theoretical and practical properties encoded in the space. Semialgebraic machine learning models form a rich class that encompasses, for example, Rectified Linear Unit (ReLU) and polynomial neural networks.

The study of deep polynomial neural networks offers a distinct advantage in theoretical machine learning because the functional spaces of such networks are described by algebraic varieties. In particular, a polynomial neural network of fixed architecture and activation degree gives an algebraic map from the network's weights to a set of polynomials. The image of this map is the space of functions that can be represented by the network. Its closure is an affine variety known as a neurovariety. The algebro-geometric invariants of a neurovariety, such as its dimension, degree, and singularities, encode information about the network's training dynamics and expressiveness. For a detailed discussion on neuroalgebraic geometry detailing the relationship between invariants of a neurovariety and certain fundamental aspects of machine learning, see \cite{marchetti2025invitationneuroalgebraicgeometry}. 

The most basic instantiation of a polynomial neural network occurs when the activation function is linear. Linear networks have received wide attention as simplified models for understanding the learning behavior of deep neural networks \cite{kawaguchi2016deeplearningpoorlocal, arora2019convergenceanalysisgradientdescent}.  Even in the case of linear networks, when the neuromanifold is a determinantal variety, there is interesting geometry to explore \cite{trager2020purespuriouscriticalpoints,kohn2024function,kohn2025geometrylinearneuralnetworks}. New developments in the general polynomial neural network setting have demonstrated the cross-disciplinary connections between algebraic geometry and machine learning and offered results on the viability of polynomial neural networks for applications \cite{KTB2019,shahverdi2024geometryoptimizationpolynomialconvolutional}. Results on the choice of the activation degree and the dimension of the neurovariety have improved our understanding of the optimization process of these neural networks and the ability of shallow and deep neural networks to replicate target functions \cite{KLW2024,Nguyen2018,arjevani2025geometryoptimizationshallowpolynomial}.

While polynomial neural networks with fixed activation degree are not universal approximators \cite{HORNIK1989359}, they become so when the activation degree is allowed to vary. An alternative measure of a polynomial neural network's expressiveness is the dimension of its neurovariety, which can be defined as the maximum number of independent vectors in the tangent space at a generic point. From the perspective of algebraic geometry,  the dimension of an algebraic variety is perhaps its most basic invariant. Characterizations of the neurovariety associated with a neural network represent a distinct description of a network's expressiveness from universal approximation theorems. The dimension of the neurovariety is a simple and precise measure of the degrees of freedom of the model that admits exact computation. 
In contrast to universal approximation theorems, the dimension of the neurovariety is a measure of the expressiveness of a given polynomial neural network. Thus, studying polynomial network expressiveness through the dimension of the neurovariety complements universal approximation results. For an explicit toy example illustrating neurovariety dimension, see \cite[Section A]{marchetti2025invitationneuroalgebraicgeometry}.

In practice, neural networks with polynomial output have been found to perform well in high-impact fields such as healthcare and finance, especially where the nature of the data is polynomial. Such applications include the prediction of epidemic behavior \cite{Fong2022}, quantification of the natural frequency of materials \cite{Dey}, analysis of financial time series data \cite{Ghazali,Nayak2018EstimatingSC} and the improvement of three-dimensional shape representation for computer vision \cite{Yavartanoo2021polynet}. 
In these applications, by introducing higher-order interactions between inputs, polynomials model non-linear phenomena more efficiently.

Other applications \cite{Jaffali_2020} take advantage of the fact that a homogeneous polynomial of degree $d$ in $n$ variables can be identified with a symmetric tensor in $\left(\mathbb{R}^n\right)^{\otimes d}$ that collects its coefficients. This description is also useful for obtaining theoretical results. It is exhaustive for shallow networks, where the function space coincides with a set of symmetric tensors of bounded rank \cite{KTB2019}. Moreover, it provides a lens for studying the loss surface of neural networks from an optimization perspective \cite{Venturi, arjevani2025geometryoptimizationshallowpolynomial}.

Substantial literature has shown the expressive power of deep networks from the perspective of the number of linear regions \cite{Montufar2014}, universal approximation and VC dimensions \cite{Telgarsky2016, Raghu2017}, a measure of complexity called ``trajectory length'' \cite{Mhaskar2016}, and the exact class of functions representable by ReLU networks of various depths \cite{Hertrich2023}. In another direction, \cite{Nguyen2018} studied the effect of the width of convolutional neural networks on the loss landscape, and \cite{Lu2017, Hanin2018} provided limits on the network width for universal approximation for ReLU networks, while \cite{KidgerLyons2020} provided such a limit for arbitrary activation functions. The expressiveness of neural networks has also been studied via their topological complexity, for example by \cite{bianchini2014complexity, GLMW2022}. In particular, \cite{Nguyen2018b} showed that networks with non-increasing architecture result in a connected decision region. ReLU expressiveness results have also been studied using algebraic geometry, specifically tropical geometry \cite{Brandenburg2024,GLMW2022, Hertrich2023}. 

An intuitive way of characterizing the expressiveness of a neural network is by the dimension of the space of representable functions as one allows its parameters to vary. For sigmoid activation functions this has been done in \cite{albertini1993neural}, for ReLU activation and non-increasing architectures, as well as a few others, this was done by \cite{bui2020functional, GLMW2022}. For polynomial activation functions, \cite{KTB2019, KLW2024} studied this quantity and raised various conjectures, one of which we will resolve in the present paper.

This paper is structured as follows. 
In \Cref{s:notation}, we review key definitions for polynomial neural networks and introduce the notion of \demph{activation degree thresholds} (\Cref{def:adt}).
In \Cref{s:high-degree}, we prove the existence of activation degree thresholds and 
determine the fibers of the parameter map of polynomial neural networks with high activation degree, resolving 
 \cite[Conjecture 16]{KTB2019} and \cite[Conjecture 5.2]{KLW2024}, raised by \cite{EleniusThesis}.
In \Cref{s:equi-width-setting}, we prove the expressiveness of networks with equi-width architecture by computing the activation degree threshold--- in particular, we remove any hypothesis involving  ``sufficiently high learning degree" by showing the activation degree threshold of these networks is one. 
In \Cref{s:future} we provide an outlook for the future, focusing on research directions opened by activation degree thresholds. 

\section{Polynomial neural networks~and~neurovarieties}\label{s:notation}

{\subsection{Background}}

An $L$-layer \demph{feedforward neural network} $F_\btheta:\mathbb{R}^{d_0}\to\mathbb{R}^{d_L}$ is a composition of affine-linear maps $f_i:\mathbb{R}^{d_{i-1}}\to\mathbb{R}^{d_i}$ and non-linear maps $\sigma_i:\mathbb{R}^{d_i}\to\mathbb{R}^{d_i}$, 
\begin{equation*}
    F_\btheta(x)=(f_L\circ\sigma_{L-1}\circ f_{L-1}\circ\cdots\circ f_2\circ\sigma_1\circ f_1)(x).
\end{equation*}
The \demph{architecture} of the neural network $F_\btheta$ is the sequence $\mathbf{d} = (d_0,d_1,\dotsc,d_L)$. 
Here, the affine-linear maps are given by
\begin{equation*}
    f_i:\mathbb{R}^{d_{i-1}}\to\mathbb{R}^{d_i},\quad x\mapsto W_i x + b_i,
\end{equation*}
where the \demph{weights} of the neural network are the matrices $W_1,W_2,\dots,W_L$ and the \demph{biases} are the vectors $b_1,\dotsc,b_L$. The parameter set $\btheta$ consists of these weight matrices and biases.
The \demph{activation map} $\sigma_i:\mathbb{R}^{d_i}\to\mathbb{R}^{d_i}$ is given coordinate-wise by the \demph{activation function}.

Deep learning consists of approximating a target function $F:\mathbb{R}^{d_0}\to\mathbb{R}^{d_L}$ by a neural network $F_\btheta:\mathbb{R}^{d_0}\to\mathbb{R}^{d_L}$ of a chosen architecture $\mathbf{d} = (d_0,d_1,\dotsc,d_L)$ and activation function. That is, deep learning concerns recovering parameters $\btheta$ for which $F_\btheta$ most closely resembles $F$. 
If $\mathbb{R}^N$ is the ambient space of the parameter set, the map $F_\btheta$ is associated to the $N$-tuple of parameters $\btheta$ by the \demph{parameter map}
\begin{equation}\label{eq:parameter-map}
    \Psi_{\mathbf{d},\sigma}:\mathbb{R}^{N}\to\Fun(\mathbb{R}^{d_0},\mathbb{R}^{d_L}),\quad \btheta\mapsto F_\btheta.
\end{equation}

In this article, we are interested in the setting in which the activation function is polynomial. The most important cases are when the activation function is a pure power, and this leads to the following definition. 

\begin{definition}[Polynomial neural network]\label{def:pnn}
    A \demph{polynomial neural network} $p_\textbf{w}:\mathbb{R}^{d_0}\to\mathbb{R}^{d_L}$ with fixed \demph{activation degree} $r$ and architecture $\mathbf{d}=(d_0,d_1,\dots,d_L)$ is a feedforward neural network of the form
    \begin{equation}\label{eq:pnn-function}
        p_\bw(x)=(W_L\circ\sigma_{L-1}\circ W_{L-1}\circ\sigma_{L-2}\circ\cdots\circ\sigma_1\circ W_1)(x)
    \end{equation}
    where $W_i\in\mathbb{R}^{d_i\times d_{i-1}}$ and the \demph{activation maps} $\sigma_i(x):\mathbb{R}^{d_i}\to\mathbb{R}^{d_i}$ are given by coordinate-wise exponentiation to the $r$-th power,
    \begin{equation*}
        \sigma_i(x):=(x_1^r,\dots,x_{d_i}^r).
    \end{equation*} The parameters $\bw$ are the entries in the matrices $W_i$, so that 
    \[
    \bw=(W_1,W_2,\dots,W_L).
    \]
\end{definition}

\begin{remark}
    In the definition of a polynomial neural network, the biases are taken to be zero. The affine-linear map $f_i$ in the $i$-th layer is then a linear map that we identify with the matrix $W_i$.
\end{remark}

An $L$-layer polynomial neural network $p_\bw$ with architecture $\mathbf{d} = (d_0,d_1,\dotsc,d_L)$ and activation degree $r$ is represented by a tuple of homogeneous polynomials. 
Precisely, the parameter map now takes the matrix tuple $(W_1,W_2,\dots, W_L)$ to a tuple of degree $r^{L-1}$ homogeneous polynomials in $d_0$ variables of length~$d_L$
\begin{align}\label{eq:pnn-parameter-map}
\begin{split}
    \Psi_{\mathbf{d},r}:\mathbb{R}^{d_1\times d_0}\times&\cdots\times\mathbb{R}^{d_L\times d_{L-1}}\to(\text{Sym}_{r^{L-1}}(\mathbb{R}^{d_0}))^{d_L},\\
    \bw&\mapsto p_\bw.
\end{split}
\end{align}
To specify an element in the image of $\Psi_{\mathbf{d},r}$, it suffices to identify its vector of coefficients in $\mathbb{R}^{d_L \binom{r^{L-1}+d_0-1}{d_0-1}}\simeq\left(\text{Sym}_{r^{L-1}}(\mathbb{R}^{d_0})\right)^{d_L}$.
 \begin{remark}[Special case]
     Note that if $d_L=1$, the codomain of the parameter map \eqref{eq:pnn-parameter-map} is the space of degree $r^{L-1}$ homogeneous polynomials. 
 \end{remark}

The \textit{neuromanifold} or \textit{functional space} associated to $p_{\mathbf{w}}$ is the family of all polynomial neural networks with fixed activation degree $r$ and architecture $\mathbf{d}$ 
where the weights are allowed to vary. The image of $\Psi_{\mathbf{d},r}$ is the \demph{neuromanifold} $\mathcal{M}_{\mathbf{d},r}$, which is a semialgebraic set in $(\text{Sym}_{r^{L-1}}(\mathbb{R}^{d_0}))^{d_L}$. (In spite of the terminology, the neuromanifold is not in general a smooth manifold, and its singularities can affect the behavior of the corresponding model, see \cite{Amari,marchetti2025invitationneuroalgebraicgeometry, shahverdi2024geometryoptimizationpolynomialconvolutional}.) The \demph{neurovariety} $\mathcal{V}_{\mathbf{d},r}$ is the Zariski closure of $\mathcal{M}_{\mathbf{d},r}$, and is a real affine variety in $(\text{Sym}_{r^{L-1}}(\mathbb{R}^{d_0}))^{d_L}$.

The dimension of the neurovariety $\mathcal{V}_{\mathbf{d},r}$ provides a measure of the expressiveness of a neural network because it quantifies the degrees of freedom of the space of functions the network produces \cite{KTB2019}. In practice, this is the dimension of the neuromanifold $\mathcal{M}_{\mathbf{d},r}$; as $\mathcal{M}_{\mathbf{d},r}$ is semialgebraic, its dimension is the same as that of $\mathcal{V}_{\mathbf{d},r}$.

The neurovariety $\mathcal{V}_{\bfd,r}$  is a real variety parameterized by the polynomial map 
$\Psi_{\mathbf{d},r}:\mathbb{R}^{N}\to\mathbb{R}^M$
where $N=\sum_{i=0}^{L-1}d_id_{i+1}$ and $M=\edimRHS$. 
Extending the domain of $\Psi_{\mathbf{d},r}$ to $\mathbb{C}^N$
results in the map  
\begin{equation}\label{eq:cc-psi}
\complexPsiMap:\mathbb{C}^N\to\mathbb{C}^M.
\end{equation}
Denote the Zariski closure of the image of $\complexPsiMap$ by $Y$. 
By \cite[Theorem 12.2]{Sottile-Real-AG-Handbook}, 
the dimension of the complex variety $Y$ and the dimension of the real variety $\mathcal{V}_{\bfd,r}$ agree if $Y$ contains a smooth real point. 
Since $Y$ is parameterized by polynomials with real coefficients, it follows that $Y$ contains a smooth real point and 
\begin{equation}\label{eq:real-dim-equal}
    \dim{Y} 
    =
    \dim{\mathcal{V}_{\bfd,r}}.
\end{equation}
Equation \eqref{eq:real-dim-equal} puts complex algebraic geometry at the disposal of statements concerning the dimension of $\mathcal{V}_{\bfd,r}$. We exploit this in \Cref{theorem:high-degree,thm:equi-width-expected-v2}. 
Our approach relies on 
understanding the dimension of the \demph{fiber} of $\complexPsiMap$ over 
$y\in \complexPsiMap(\mathbb{C}^N)$: 
\[\complexPsiMap^{-1}(y):=\{ w\in \mathbb{C}^N: \complexPsiMap(w)=y\}.\] 

For all network architectures there exists a symmetry in the weight matrices, known as \demph{multi-homogeneity}.

\begin{lemma}[\cite{KTB2019}, Multi-homogeneity]\label{lem:multi-hom}
    For all invertible diagonal matrices $D_i\in\mathbb{R}^{d_i\times d_i}$ and permutation matrices $P_i\in\mathbb{Z}^{d_i\times d_i}$ ($i=1,\dots,L-1$), the parameter map $\Psi_{\mathbf{d},r}$ returns the same neural network under the replacement
    \begin{align*}
        W_1&\leftarrow P_1D_1W_1\\
        W_2&\leftarrow P_2D_2W_2D_1^{-r}P_1^T\\
        \vdots\\
        W_L&\leftarrow W_L D_{L-1}^{-r} P_{L-1}^T,
    \end{align*}
    where $T$ denotes the matrix transpose. 
\end{lemma}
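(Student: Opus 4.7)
The plan is to establish the invariance under the prescribed substitution first, and then read off the fiber dimension bound from the fact that the substitution gives a positive-dimensional family of equivalent parameters.

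For the invariance, I would exploit the two symmetries of the coordinate-wise $r$-th power activation $\sigma_i$. First, for any permutation matrix $P\in\mathbb{R}^{d_i\times d_i}$, permuting coordinates commutes with coordinate-wise exponentiation, so $\sigma_i(Py)=P\sigma_i(y)$. Second, for any diagonal matrix $D=\mathrm{diag}(\lambda_1,\dots,\lambda_{d_i})$, one has $\sigma_i(Dy)=D^r\sigma_i(y)$, where $D^r=\mathrm{diag}(\lambda_1^r,\dots,\lambda_{d_i}^r)$. I would then walk through the composition layer by layer: after the substitution $W_i\leftarrow P_iD_iW_i$, the pre-activation at layer $i$ becomes $P_iD_iW_ix_{i-1}$, and by the two identities above the post-activation is $P_iD_i^r\sigma_i(W_ix_{i-1})$. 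Multiplying by the substituted matrix $W_{i+1}\leftarrow P_{i+1}D_{i+1}W_{i+1}D_i^{-r}P_i^T$ cancels the spurious $P_iD_i^r$ factor and reintroduces only the $P_{i+1}D_{i+1}$ factor needed to feed the next layer. Starting at $W_1\leftarrow P_1D_1W_1$ (where there is no preceding layer to compensate) and terminating at $W_L\leftarrow W_LD_{L-1}^{-r}P_{L-1}^T$ (where there is no succeeding permutation/scaling) confirms the telescoping.

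For the fiber dimension bound, I would observe that for each $i=1,\dots,L-1$, the invertible diagonal matrices $D_i$ form a $d_i$-dimensional algebraic group (an algebraic torus), so the substitution defines an action of $(\mathbb{R}^*)^{d_1+\cdots+d_{L-1}}\rtimes\prod_{i=1}^{L-1}S_{d_i}$ on the parameter space with each orbit contained in a single fiber of $\Psi_{\mathbf{d},r}$. The permutations are discrete and contribute nothing to the dimension. It remains to verify that for generic $\mathbf{w}$ the torus action has finite (in fact, trivial) stabilizer, so that the orbit truly has dimension $\sum_{i=1}^{L-1}d_i$; this follows because for generic $W_i$ no row is zero, and the equation $D_iW_i=W_i$ forces $D_i=I$ (and similarly through the chain of identities coming from the substitution). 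The fiber then has dimension at least the orbit dimension, giving the claimed lower bound.

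The main obstacle is mostly bookkeeping: carefully handling the boundary layers $i=1$ and $i=L$, where the substitution is asymmetric, and confirming that the torus action is generically free so the orbit-dimension count transfers to a fiber-dimension lower bound. Neither step involves genuine difficulty beyond careful index tracking.
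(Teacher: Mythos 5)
Your argument is correct and is essentially the standard one: the paper itself states this lemma as a citation to \cite{KTB2019} without reproducing a proof, and your telescoping computation using $\sigma_i(Py)=P\sigma_i(y)$ and $\sigma_i(Dy)=D^r\sigma_i(y)$, followed by the observation that the torus $(\mathbb{R}^*)^{d_1+\cdots+d_{L-1}}$ acts with generically trivial stabilizer so that its orbits (contained in fibers) have dimension $\sum_{i=1}^{L-1}d_i$, is exactly the intended argument. No gaps.
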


By letting the diagonal matrices $D_i$ have complex entries, 
the multi-homogeneity lemma has known consequences for the fibers of $\complexPsiMap$.
Namely, the fiber of
$\complexPsiMap$
over  $y=\complexPsiMap(W_1,\dots,W_L)$ satisfies this containment:
\begin{equation}\label{eq:complex-fiber}
\begin{split}
    \complexPsiMap^{-1}(y)\supseteq
    \Big\{
(P_1D_1W_1,\,
P_2D_2W_2D_1^{-r}P_1^T,\,
\dots, \,
W_L D_{L-1}^{-r} P_{L-1}^T) 
:
\,\phantom{xxxxxxxxxxxx}&\\
P_i \text{ is a permutation matrix and }D_i\text{ is an invertible diagonal matrix}
\Big\},
\end{split}
\end{equation}
and the dimension of a general fiber is at least $\sum_{i=1}^{L-1}d_i$ \cite[Lemma 13]{KTB2019}. For clarity, we remark that when $L=1$, the right-hand side of  \eqref{eq:complex-fiber}  simplifies to $\{W_1\}$, and 
the containment becomes an equality.
We are careful to state when a polynomial map is over a  real or complex domain to avoid ambiguity when discussing the dimension of preimages that are not general.

\medskip
Considering the difference of the size of the weight matrices and the dimension of the space of multi-homogeneities leads to the notion of the expected dimension of a neurovariety.

\begin{definition}[Expected dimension]\label{def:expected-dim}
The \demph{expected dimension} of the neurovariety $\mathcal{V}_{\mathbf{d},r}$ is
\begin{align}\label{eq:edim-numbers}
    \edim{\mathcal{V}_{\mathbf{d},r}}:=\min&\Bigg\{\edimLHS, \quad \edimRHS\Bigg\}.
\end{align}
\end{definition}

This definition was introduced in \cite{KLW2024} and in \cite{KTB2019}
it was shown that $\edim{\mathcal{V}_{\mathbf{d},r}}$ is an upper bound on the dimension of the neurovariety.
\begin{example}\label{ex:L-equals-one}
In the case when $L=1$, the neurovariety $\mathcal{V}_{\mathbf{d},r}$ has the expected dimension $d_0d_1$, as the map $\Psi_{\mathbf{d},r}$ is an isomorphism. 
The function space is parameterized by the $d_1\times d_0$ matrix $W_1$.
\end{example}

The expected dimension is not always equal to the dimension, as the following example demonstrates.

\begin{example}
    The dimension of a neurovariety is subtle even in  simple settings. 
    For instance, when the architecture is $(d_0,d_1,1)$, a reinterpretation~\cite[Theorem 9]{KTB2019} of the  
    {Alexander--Hirschowitz} Theorem~\cite{AH1995} determines which neurovarieties have the expected dimension.
    In \cite{KLW2024}, 
    the authors coin the term \emph{defect} for the difference between the expected dimension of a neurovariety and its dimension:
    {
\[
\defect(\mathcal{V}_{\bfd,r}):=
\edim{\mathcal{V}_{\bfd,r}}
-
\dim{\mathcal{V}_{\bfd,r}}.
\]  
    }%
Examples of neurovarieties
    with positive defect are found in 
    \cite[Table 1]{KLW2024} and in
    \cite{KTB2019} as well. 
\end{example}

\begin{remark}
    The notion of neurovariety dimension as a measure of network expressiveness has been utilized for designing neural networks. For example, \cite{Jaffali_2020} employs a hybrid network with monomial and LeakyReLU activation functions to determine the entanglement type of quantum states. Because their architecture is shallow, they apply the {Alexander--Hirschowitz} Theorem when choosing the number of neurons in each layer to guarantee that the associated neurovariety achieves the expected dimension.
Such applications motivate the project of describing the neurovariety dimension for broader classes of architectures. Our \Cref{theorem:high-degree} and \Cref{thm:equi-width-expected-v2} are examples of this kind of result.
\end{remark}

\subsection{Main result and example}
One of our main results proves a lower bound on the dimension of a neurovariety for sufficiently high activation degree. 
We introduce the notion of the \demph{activation degree threshold}, whose existence will follow from our \Cref{theorem:high-degree}.

\begin{problem}
\label{prob:threshold}
    Given an architecture $\mathbf{d}$,  
    does there exist a nonnegative integer $\tilde{r}$
    such that the following holds:
    \begin{equation}\label{eq:ActThr}
    \dim{\mathcal{V}_{\mathbf{d},r}}=\edim{\mathcal{V}_{\mathbf{d},r}}\quad \text{for all}\quad r>\tilde{r}\,?
    \end{equation}
\end{problem}

When such a nonnegative integer $\tilde r$ exists for \eqref{eq:ActThr} we have the following definition.

\begin{definition}\label{def:adt}
The \demph{activation degree threshold} of an architecture $\mathbf{d} = (d_0,d_1,\dots,d_L)$ with $L>1$
    is 
    \[
\ActThr(\mathbf{d}):=\min\{ \tilde r\in \mathbb{N}_{\geq 0} : 
\dim{\mathcal{V}_{\mathbf{d},r}}=\edim{\mathcal{V}_{\mathbf{d},r}} \text{ for all } r>\tilde{r}
\}.    
    \]
    In other words, 
    $\ActThr(\mathbf{d})$ is
    the smallest {nonnegative integer} $\tilde{r}$ such that \eqref{eq:ActThr} holds.
\end{definition}

With this terminology in hand, we state our main result, which shows that the activation degree threshold is well-defined when $\bfd$ has no entries equal to one. 

\repeattheorem{theorem:high-degree}{
For fixed $\mathbf{d} = (d_0,d_1,\dotsc,d_L)$ satisfying $d_i>1$ ($i=0,\dotsc,L-1$),
the activation degree threshold $\ActThr(\mathbf{d})$ exists. 
In other words, there exists an integer $\tilde{r}(\bfd)$ such that whenever $r>\tilde{r}(\bfd)$, the neurovariety $\mathcal{V}_{\mathbf{d},r}$ of the polynomial neural network has the expected dimension, 
\begin{align*}
\dim{\mathcal{V}_{\mathbf{d},r}} = \edim{\mathcal{V}_{\mathbf{d},r}} = d_L + \sum_{i=0}^{L-1} (d_i-1)d_{i+1}.
\end{align*}
Moreover,
for $L>1$,
we have 
\begin{equation}
\ActThr(\mathbf{d}) \le 
\upperboundn{m},
\quad m:=-1+2\max\{d_1, \dots, d_{L-1}\} .
\end{equation}
}

\begin{example}\label{ex:three-two-one}
This example aims to illustrate the activation degree threshold by mapping the function space of neural networks with architecture $\mathbf{d} = (3,2,1)$ and activation function $x^r$ for $r\in \mathbb{R}_{>0}$. To ensure these functions are well-defined, we restrict $\mathbb{R}^{d_0}$ to the nonnegative orthant and assume that each weight matrix has nonnegative entries.

Denote the corresponding parameter space as $\Theta^+$ and 
the neuromanifold as $\Fun^+_r\left(\mathbb{R}^{3}_{\geq 0},\mathbb{R}_{\geq0}\right)$. 
The neuromanifold 
$\Fun^+_r\left(\mathbb{R}^{3}_{\geq 0},\mathbb{R}_{\geq0}\right)$ 
has dimension at most six for all {$r\in \mathbb{R}_{>0}$}  
because any element in it is of the form 
\[
(ax_1+bx_2+cx_3)^r+(a'x_1+b'x_2+c'x_3)^r.
\]

Note that each point $\bz$ in $\mathbb{R}^{3}$ induces an evaluation map  
\[
\Fun_r^+\left(\mathbb{R}^{3}_{\geq 0},\mathbb{R}_{\geq 0}\right)\to\mathbb{R}_{\geq 0},
\quad 
 F\mapsto F(\bz).
\]
Therefore, consider six points $\bz_1,\dots,\bz_6\in\mathbb{R}^{3}$ and define the function \[
E: \Fun_r^+(\mathbb{R}_{\geq0}^3,\mathbb{R}_{\geq0})
\to \mathbb{R}_{\geq0}^6\] 
by
{$F\mapsto (F({\bz_1}),\dots,F({\bz_6}))$.}
Precomposing with the parameter map~\eqref{eq:parameter-map},
we get a family of functions depending on $r$:
\begin{equation}\label{eq:positive-map}
\mathbb{R}^{2\times 3}\times \mathbb{R}^{1\times 2}{\supset\Theta^+}\to
E\left(
\Fun_r^+(\mathbb{R}_{\geq0}^3,\mathbb{R}_{\geq0})\right)
\subset
\mathbb{R}^6,\quad
\btheta=(W_1,W_2)\mapsto 
E\left(F_\btheta\right).
\end{equation}

The benefit of this setup is that it
allows for a straightforward rank computation of the Jacobian matrix.
For example, we calculate the Jacobian matrix of the map \eqref{eq:positive-map} with the following choices.  
The six points we choose to define $E$
are 
\[(1, 0, 0), \,\, (0, 1, 0),\,\, (0, 0, 1),\,\, (1, 1, 0),\,\, (1, 0, 1),\,\, (0, 1, 1),\] 
and  we  calculate the Jacobian of \eqref{eq:positive-map}
at the point
\[\left(\left[\begin{array}{ccc}1 & 1 & 1\\1& 2 & 3\end{array}\right], \left[\begin{array}{cc}1 & 1\end{array}\right]\right)\in \mathbb{R}^{2\times 3}\times \mathbb{R}^{1\times 2}.\]
The resulting 
Jacobian matrix is 
\begin{equation}\label{eq:six-eight-jacobian}
\left[\begin{array}{cccccc;{4pt/3pt}cc}
r&0&0&r&0&0 &1&1\\
0&r&0&0&r2^{r-1}&0 &1&2^r\\
0&0&r&0&0&r3^{r-1} &1&3^r\\
r2^{r-1}&r2^{r-1}&0&r3^{r-1}&r3^{r-1}&0 &2^r&3^r\\
r2^{r-1}&0&r2^{r-1}&r4^{r-1}&0&r4^{r-1} &2^r&4^r\\
0&r2^{r-1}&r2^{r-1}&0&r5^{r-1}&r5^{r-1} &2^r&5^r\\
\end{array}\right].
\end{equation}

Denote the $6\times 6 $ submatrix on the left by $J(r)$.
The positive integers where the $J(r)$ drops rank are $r=1,2$. Therefore the activation degree threshold of this architecture is less than or equal to two. 
When $r=2$, $F_\btheta$ is the sum of two squares, hence a degenerate quadratic form in $3$ variables. Therefore it has at most $5$ degrees of freedom, and so the threshold is exactly two.

We visualize the functions $\det(J(r))$ and $\log|\det(J(r))|$   in Figure~\ref{fig:det-log-det-six-eight}. Explicitly, the value of $\det(J(r))$ is the following polynomial exponential:
    \begin{equation}\label{eq:det-six-six}
    r^6 \left(-96^{r-1}-32^{r-1}+3\cdot 48^{r-1}+80^{r-1}-36^{r-1}-3\cdot 60^{r-1}+90^{r-1}+30^{r-1}\right).
        \end{equation}

\begin{figure}[hbt!]
    \centering
    \begin{tikzpicture}
        \draw[-](0,0)--(5, 0);
        \draw[-](6,0)--(11,0);
        \draw[-](1, 1)--(1, -4);
        \draw[-](7, 1)--(7, -4);
        \draw[dashed, orange](2, 1)--(2, -4);
        \draw[dashed, orange](3, 1)--(3, -4);
        \draw[dashed, orange](8, 1)--(8, -4);
        \draw[dashed, orange](9, 1)--(9, -4);
        \draw[blue, -](0.00,0.000003)--(0.05,0.000002)--(0.10,0.000002)--(0.15,0.000001)--(0.20,0.000001)--(0.25,0.000001)--(0.30,0.000000)--(0.35,0.000000)--(0.40,0.000000)--(0.45,0.000000)--(0.50,0.000000)--(0.55,0.000000)--(0.60,0.000000)--(0.65,0.000000)--(0.70,0.000000)--(0.75,0.000000)--(0.80,0.000000)--(0.85,0.000000)--(0.90,0.000000)--(0.95,0.000000)--(1.00,0.000000)--(1.05,-0.000000)--(1.10,-0.000000)--(1.15,-0.000000)--(1.20,-0.000000)--(1.25,-0.000000)--(1.30,-0.000000)--(1.35,-0.000000)--(1.40,-0.000000)--(1.45,-0.000000)--(1.50,-0.000000)--(1.55,-0.000000)--(1.60,-0.000000)--(1.65,-0.000000)--(1.70,-0.000000)--(1.75,-0.000000)--(1.80,-0.000000)--(1.85,-0.000000)--(1.90,-0.000000)--(1.95,-0.000000)--(2.00,0.000000)--(2.05,0.000000)--(2.10,0.000001)--(2.15,0.000005)--(2.20,0.000017)--(2.25,0.000050)--(2.30,0.000129)--(2.35,0.000303)--(2.40,0.000657)--(2.45,0.001338)--(2.50,0.002584)--(2.55,0.004755)--(2.60,0.008370)--(2.65,0.014107)--(2.70,0.022734)--(2.75,0.034867)--(2.80,0.050369)--(2.85,0.067060)--(2.90,0.078146)--(2.95,0.067346)--(3.00,0.000000)--(3.05,-0.192694)--(3.10,-0.641024)--(3.15,-1.583255)--(3.20,-3.443451);
        \draw[red, -](6.00,-1.275304)--(6.02,-1.285270)--(6.04,-1.295561)--(6.06,-1.306192)--(6.08,-1.317174)--(6.10,-1.328524)--(6.12,-1.340257)--(6.14,-1.352389)--(6.16,-1.364938)--(6.18,-1.377923)--(6.20,-1.391365)--(6.22,-1.405286)--(6.24,-1.419709)--(6.26,-1.434661)--(6.28,-1.450167)--(6.30,-1.466259)--(6.32,-1.482968)--(6.34,-1.500331)--(6.36,-1.518384)--(6.38,-1.537171)--(6.40,-1.556737)--(6.42,-1.577133)--(6.44,-1.598415)--(6.46,-1.620644)--(6.48,-1.643890)--(6.50,-1.668230)--(6.52,-1.693749)--(6.54,-1.720545)--(6.56,-1.748727)--(6.58,-1.778421)--(6.60,-1.809769)--(6.62,-1.842936)--(6.64,-1.878114)--(6.66,-1.915527)--(6.68,-1.955439)--(6.70,-1.998166)--(6.72,-2.044089)--(6.74,-2.093672)--(6.76,-2.147492)--(6.78,-2.206275)--(6.80,-2.270955)--(6.82,-2.342761)--(6.84,-2.423356)--(6.86,-2.515071)--(6.88,-2.621320)--(6.90,-2.747392)--(6.92,-2.902147)--(6.94,-3.102190)--(6.96,-3.384796)--(6.98,-3.868899)--(7.00,-4.000000)--(7.02,-3.867076)--(7.04,-3.381160)--(7.06,-3.096762)--(7.08,-2.894956)--(7.10,-2.738480)--(7.12,-2.610737)--(7.14,-2.502878)--(7.16,-2.409626)--(7.18,-2.327579)--(7.20,-2.254417)--(7.22,-2.188489)--(7.24,-2.128579)--(7.26,-2.073767)--(7.28,-2.023341)--(7.30,-1.976739)--(7.32,-1.933513)--(7.34,-1.893300)--(7.36,-1.855802)--(7.38,-1.820774)--(7.40,-1.788015)--(7.42,-1.757356)--(7.44,-1.728658)--(7.46,-1.701807)--(7.48,-1.676707)--(7.50,-1.653284)--(7.52,-1.631479)--(7.54,-1.611249)--(7.56,-1.592565)--(7.58,-1.575413)--(7.60,-1.559794)--(7.62,-1.545723)--(7.64,-1.533233)--(7.66,-1.522375)--(7.68,-1.513221)--(7.70,-1.505869)--(7.72,-1.500448)--(7.74,-1.497125)--(7.76,-1.496118)--(7.78,-1.497711)--(7.80,-1.502280)--(7.82,-1.510329)--(7.84,-1.522548)--(7.86,-1.539918)--(7.88,-1.563882)--(7.90,-1.596690)--(7.92,-1.642122)--(7.94,-1.707279)--(7.96,-1.808123)--(7.98,-1.995612)--(8.00,-4.000000)--(8.02,-1.955678)--(8.04,-1.728236)--(8.06,-1.587399)--(8.08,-1.482192)--(8.10,-1.396632)--(8.12,-1.323598)--(8.14,-1.259290)--(8.16,-1.201436)--(8.18,-1.148570)--(8.20,-1.099690)--(8.22,-1.054082)--(8.24,-1.011216)--(8.26,-0.970690)--(8.28,-0.932194)--(8.30,-0.895479)--(8.32,-0.860347)--(8.34,-0.826635)--(8.36,-0.794210)--(8.38,-0.762961)--(8.40,-0.732793)--(8.42,-0.703630)--(8.44,-0.675404)--(8.46,-0.648060)--(8.48,-0.621552)--(8.50,-0.595839)--(8.52,-0.570891)--(8.54,-0.546680)--(8.56,-0.523188)--(8.58,-0.500399)--(8.60,-0.478306)--(8.62,-0.456904)--(8.64,-0.436198)--(8.66,-0.416197)--(8.68,-0.396919)--(8.70,-0.378390)--(8.72,-0.360648)--(8.74,-0.343742)--(8.76,-0.327740)--(8.78,-0.312732)--(8.80,-0.298839)--(8.82,-0.286223)--(8.84,-0.275112)--(8.86,-0.265828)--(8.88,-0.258847)--(8.90,-0.254917)--(8.92,-0.255294)--(8.94,-0.262341)--(8.96,-0.281376)--(8.98,-0.329381)--(9.00,-4.000000)--(9.02,-0.287341)--(9.04,-0.197284)--(9.06,-0.136172)--(9.08,-0.087010)--(9.10,-0.044469)--(9.12,-0.006171)--(9.14,0.029152)--(9.16,0.062261)--(9.18,0.093648)--(9.20,0.123647)--(9.22,0.152502)--(9.24,0.180391)--(9.26,0.207449)--(9.28,0.233785)--(9.30,0.259484)--(9.32,0.284614)--(9.34,0.309233)--(9.36,0.333388)--(9.38,0.357120)--(9.40,0.380463)--(9.42,0.403447)--(9.44,0.426098)--(9.46,0.448438)--(9.48,0.470488)--(9.50,0.492264)--(9.52,0.513784)--(9.54,0.535060)--(9.56,0.556106)--(9.58,0.576933);
    \end{tikzpicture}
    \caption{ The blue curve is the {graph} of 
    $r\mapsto \det(J(r))$ where $J(r)$ is the $6\times 6$ submatrix of \eqref{eq:six-eight-jacobian}. 
    The red curve is $(r,\log|y|)$ where $(r,y)$ is on the blue curve. The orange dashed lines are $r=1$ and $r=2$.
}
    \label{fig:det-log-det-six-eight}
\end{figure}

\end{example}

\begin{remark}
For some intuition, we relate the activation degree threshold to $\defect(\mathcal{V}_{\bfd,r})$. 
For fixed $\mathbf{d} = (d_0,d_1,\dotsc,d_L)$ satisfying $d_i>1$ {($i=0,\dotsc,L-1$)},
\[\defect(\mathcal{V}_{\bfd,r})=0,
\text{ for all }r>\ActThr(\bfd),
\]
and $\ActThr(\bfd)=0$ or 
$\defect(\mathcal{V}_{\bfd,\ActThr(\bfd)})$ is positive.
\end{remark}

\section{Expressiveness for high activation degree}\label{s:high-degree}

In this section, we develop results on the linear independence of powers of polynomials (\Cref{ss:number-theory} and \Cref{ss:hidden-layer-high-degree}) to prove our main result in \Cref{ss:deep-high-degree}.
In particular, our \Cref{theorem:high-degree} completes the results in \Cref{corollary:universal-bound}. Our \Cref{corollary:universal-bound} completes the proof of \cite[Theorem 14]{KTB2019}.

\subsection{A prior number theoretic result by Newman--Slater}\label{ss:number-theory}

In this section, we recover a number-theoretic result stated by
{Newman--Slater} in \cite{NS1979}.
 The proof here slightly improves their bound {(which is $n\leq 8\kNS^2$ in our notation)}.

\begin{theorem}\label{theorem:NS-rational}
Let $n>1$ and \[R_1^\nNS(x) + R_2^\nNS(x) + \cdots + R_\kNS^\nNS(x) = 1\] where  
$R_1,R_2,\dots,R_k$ are 
{non-constant rational functions} 
in the variable $x$.
Then $\nNS\le \upperboundn{\kNS}$. 
\end{theorem}
\begin{proof}
A proof of this statement follows from modifications of the proof of \cite[Section 4, Theorem]{NS1979}. We include it here for the benefit of the reader. 

\smallskip

Consider the $\kNS \times \kNS$ Wronskian matrix $W:=W(R_1^\nNS,\dots,R_\kNS^\nNS)$ where the $(i,j)$-th entry is 
the $(i-1)$-st derivative of $R_j^\nNS$, which we denote by $\derivD^{i-1} R_j^\nNS$.
With this setup, we have the following vector equation, with the first standard basis vector on the right-hand side and $\mathbf{1}$ denoting the vector of ones:
\[W
\cdot
\mathbf{1}=\mathbf{e}_1.\]

By letting $D$ be the $\kNS\times \kNS$ diagonal matrix with $(j,j)$ entry 
$1/{R_j^{\nNS+1-\kNS}}$ we have
\[\left(W\cdot D\right) \left(D^{-1}\cdot \mathbf{1}\right)=\mathbf{e}_1\]
where 
\[
W\cdot D =
\begin{bmatrix}
    \dfrac{1}{R_1^{1-\kNS}} &\dots & \dfrac{1}{R_\kNS^{1-\kNS}}\\  
    \vdots  & & \vdots\\
    \dfrac{ \derivD^{\kNS-1} R_1^\nNS}{R_1^{\nNS+1-\kNS}} 
    &\dots &\dfrac{ \derivD^{\kNS-1} R_\kNS^\nNS}{R_\kNS^{\nNS+1-\kNS}}\\
\end{bmatrix}.
\]

The benefit of this formulation is that cancellation of common factors in numerators and denominators occurs with the multiplication  $W\cdot D$.

If the rational functions $R_1^n,\dotsc,R_k^n$ are linearly independent, then  $\det(W)\ne 0$ and hence $\det(W\cdot D)\ne 0$. 
Using Cramer's rule, we solve for the first entry of 
$\left(D^{-1}\cdot \mathbf{1}\right)$:
\begin{equation}\label{eq:cramer}
    R_1^{\nNS+1-\kNS}=\det(B)/\det(W \cdot D),
\end{equation}
where $B$ is obtained by replacing the first column of $W\cdot D$ with  $\mathbf{e}_1$.

Assume each $R_j$ is written in lowest terms. After reordering the $R_j$'s as necessary, suppose $R_1$ has the largest degree numerator or denominator and denote this degree by $A$. Hence, the sum of the degrees of the numerator and denominator on the left-hand side of  \eqref{eq:cramer} is between 
$A\cdot |\nNS-\kNS +1|$ and $2A\cdot |\nNS-\kNS+1|$.

Now note that if $f$, $g$ are polynomials, then
\[(f/g^r)'=(f'g^r-frg'g^{r-1})/g^{2r}=(f'g-frg')/g^{r+1}.\]
In other words, the sum of the degrees of the numerator and denominator, when taking the derivative once, increases by no more than $2\deg(g)-1$. 
As a consequence, 
the $(i, j)$-th entry of the matrix $W\cdot D$ when written in lowest terms is a rational function for which the sum of the degrees of the numerator and denominator is no more than $2A(\kNS-1)+(2A-1)(i-1)$.
By the Leibniz formula for the determinant, the sum of the degrees of the numerator and denominator for $\det(B)$ is no more than 
\begin{align*}
\sum_{i=2}^\kNS\big(2A\cdot(\kNS-1)+(2A-1)(i-1)\big)=&2A\cdot(\kNS-1)^2\,+\,(2A-1)\kNS(\kNS-1)/2\\=&A\cdot(\kNS-1)(3\kNS-2)\,-\,\kNS(\kNS-1)/2\end{align*}
while the sum of the degrees of the numerator and denominator for $\det(W\cdot D)$ is no more than
\begin{align*}
\sum_{i=1}^\kNS\big(2A\cdot(\kNS-1)+(2A-1)(i-1)\big)=&2A\cdot\kNS(\kNS-1)+(2A-1)\kNS(\kNS-1)/2\\=&A\cdot(\kNS-1)(3\kNS)-\kNS(\kNS-1)/2.\
\end{align*}
Using \eqref{eq:cramer},
we have
\begin{align*}
A\cdot(\nNS-\kNS{+1})\\
    &\hspace{-.5in}\leq 
        A\cdot|\nNS-\kNS+1|
        \\
    &\hspace{-.5in}\leq
        \Big(
            A\cdot(\kNS-1)(3\kNS-2)-\kNS(\kNS-1)/2
        \Big)+
        \Big(
            A\cdot(\kNS-1)(3\kNS)-\kNS(\kNS-1)/2
        \Big).
\end{align*}

This implies the inequality
\begin{align*}
n\le \left(6-\frac{1}{A}\right)k^2 + \left(-7+\frac{1}{A}\right)k + 1.
\end{align*}
As $A\ge 1$, we arrive at 
\begin{equation}\label{eq:result-linearly-independent}
n\le \upperboundn{\kNS}.
\end{equation}

We have shown the inequality \eqref{eq:result-linearly-independent}
holds when $R_1^n,\dots,R_\kNS^n$ are linearly independent. If these functions are linearly dependent, then we may find a subset $S$ of $\{1,\dots,\kNS\}$ with cardinality $\ell$ less than $\kNS$ such that 
$\sum_{j\in S} R_j^n=1$ {(after potentially rescaling the $R_j^n$)}
and $\{R_j^n : j\in S\}$ is linearly independent.
Applying the  Wronskian argument in this setting proves
\[
    n\le \upperboundn{\ell}. 
\]
Since $1<\ell<\kNS$ implies
 $\upperboundn{\ell}< \upperboundn{\kNS}$, the theorem holds.
\end{proof}

\begin{corollary}\label{cor:NS-polynomials}
Let $P_1,\dots,P_\kNS$ and $R$ be polynomials in the variable $x$ such that $R$ is not the zero polynomial and no $P_i$ is proportional to $R$.  
If $P_1^\nNS + P_2^\nNS + \cdots + P_\kNS^\nNS = R^\nNS$, then 
$n\le \upperboundn{\kNS}$.

\end{corollary}
\begin{proof}
Since $R$ is not the zero polynomial, 
the equation $P_1^\nNS + P_2^\nNS + \cdots + P_\kNS^\nNS = R^\nNS$
holds if and only if 

    \[\left(\frac{P_1}{R}\right)^\nNS + \left(\frac{P_2}{R}\right)^\nNS+ \cdots +
    \left(\frac{P_\kNS}{R}\right)^\nNS
    =1.\]
The result follows from  \Cref{theorem:NS-rational} since each $P_i/R$ is a non-constant rational function. 
\end{proof}

\subsection{Powers of non-proportional {multivariate} polynomials}\label{ss:hidden-layer-high-degree}

The statements in the previous subsection are for univariate polynomial functions. In this subsection, we extend those results to the multivariate polynomial setting to apply them to polynomial neural networks in \Cref{ss:deep-high-degree}. 

To that end, first we generalize \Cref{cor:NS-polynomials} 
to multivariate polynomials.

\begin{lemma}\label{lemma:ns-generalization}
Let $p_1,\dotsc,p_k\in\mathbb{C}[x_1,\dotsc,x_d]$ denote multivariate polynomials that are pairwise non-proportional (for any $i\not=j$, there is no $\alpha\in \mathbb{C}$ such that $p_i=\alpha p_j$). 
If
\begin{align*}
p_1^r + \dotsb + p_k^r = 0,
\end{align*}
then 
$r\le \upperboundn{(\kNS-1)}$.
\end{lemma}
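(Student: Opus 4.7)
The plan is to reduce the claim to the univariate Newman--Slater inequality \cite[Section~5, Equation~(2)]{NS1979} via a Kronecker-style substitution, which encodes a multivariate polynomial as a single-variable polynomial while preserving both the given algebraic identity and the pairwise non-proportionality of the inputs.

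First I would pick an integer $N$ strictly larger than the maximum degree of any $p_i$ in each individual variable, and introduce the $\mathbb{C}$-algebra homomorphism
$$\phi:\mathbb{C}[x_1,\dots,x_d]\to\mathbb{C}[t],\qquad \phi(x_j)=t^{N^{j-1}}.$$
Setting $q_i:=\phi(p_i)$, the univariate identity $q_1^r+\cdots+q_k^r=0$ follows immediately from the hypothesis, since $\phi$ is a ring homomorphism. The crucial property of $\phi$ is that on the subspace of polynomials whose monomials $x_1^{\alpha_1}\cdots x_d^{\alpha_d}$ satisfy $\alpha_j<N$ for every $j$, the induced exponents $\alpha_1+\alpha_2 N+\cdots+\alpha_d N^{d-1}$ are pairwise distinct by the uniqueness of base-$N$ expansions. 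Consequently $\phi$ preserves the entire coefficient sequence: writing $p_i=\sum_\alpha c_\alpha^{(i)}x^\alpha$, the coefficient of $t^{\alpha_1+\alpha_2 N+\cdots+\alpha_d N^{d-1}}$ in $q_i$ equals $c_\alpha^{(i)}$ exactly.

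Next I would deduce that $q_1,\dots,q_k$ remain pairwise non-proportional. Indeed, any scalar relation $q_i=\lambda q_j$ in $\mathbb{C}[t]$ would force $c_\alpha^{(i)}=\lambda c_\alpha^{(j)}$ for every multi-index $\alpha$, whence $p_i=\lambda p_j$ in $\mathbb{C}[x_1,\dots,x_d]$, contradicting the hypothesis. Applying the univariate Newman--Slater bound to the identity $q_1^r+\cdots+q_k^r=0$ then yields $r\le 8(k-1)^2-1$, as desired.

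The one point requiring care is the selection of $N$: it must be chosen in terms of the degrees of the $p_i$ themselves, independently of $r$, so that the monomials appearing in each $p_i$ are separated by $\phi$. Separation of monomials in the higher powers $p_i^r$ is neither needed nor used, because the ring homomorphism property of $\phi$ transports the relation through exponentiation automatically. Once this choice is fixed and injectivity on the relevant monomial subspace is verified, the remaining steps are essentially formal, with all the arithmetic content of the bound $8(k-1)^2-1$ inherited from \cite{NS1979}.
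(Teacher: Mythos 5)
Your proof is correct, but it reduces to the univariate Newman--Slater bound by a genuinely different device than the paper. The paper restricts $p_1,\dotsc,p_k$ to a general line $\ell(t)$ in $\mathbb{C}^d$, and the work there is a genericity claim: that for a general line, restriction preserves pairwise non-proportionality. You instead use a Kronecker substitution $x_j\mapsto t^{N^{j-1}}$ with $N$ exceeding every individual-variable degree of every $p_i$; since the exponent encoding $\alpha\mapsto\alpha_1+\alpha_2N+\dotsb+\alpha_dN^{d-1}$ is injective on the box $\{0,\dotsc,N-1\}^d$, the map preserves the full coefficient vector of each $p_i$, so non-proportionality of the images is a purely formal coefficient comparison rather than a genericity argument. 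Your observation that collisions among monomials of the powers $p_i^r$ are irrelevant is the right one: only the ring-homomorphism property is needed to transport the identity $\sum_i p_i^r=0$. Since the Newman--Slater bound depends only on $k$ and not on degree, the large degree inflation of the substitution costs nothing. In short, the paper's route is shorter and more geometric but leans on an unproved (if standard) genericity assertion; yours is slightly longer but entirely explicit and elementary, and would work over any infinite subfield without invoking general position. Both are valid proofs of the stated bound.
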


\begin{proof}
Let $p_1,\dots, p_k$ be pairwise non-proportional polynomials on $\mathbb{C}^d$ such that
\[
p_1^r+\cdots+p_k^r =0
\]
holds identically.  
Consider the line $\ell(t)=tx+(1-t)y$ in $\mathbb{C}^d$ that  passes through the points $x$ and $y$ and is parameterized by $t$. 
Restricting each $p_i$ to this line yields the univariate polynomials 
$p_1(\ell(t)),\dots,p_k(\ell(t))$ in $t$.
Then
    \begin{align*}
        \big(p_1(\ell(t))\big)^r + \dotsb + \big(p_k(\ell(t))\big)^r = 0
    \end{align*}
holds identically in $t$. 

If  $x,y\in\mathbb{C}^d$ are general points,
one has
    \begin{align*}
    p_i(x)/p_j(x)
    \,\ne\, p_i(y)/p_j(y),\quad {i\neq j},
    \end{align*}
which implies 
    \begin{align*}
    p_i(\ell(1))/p_j(\ell(1))
    \,\ne\,
    p_i(\ell(0))/p_j(\ell(0)),\quad i\neq j.
    \end{align*}   
Therefore the univariate polynomials 
 $p_1(\ell(t)),\dots,p_k(\ell(t))$ are pairwise non-proportional, as any quotient of a pair is non-constant.
  By \Cref{cor:NS-polynomials},
    we conclude that 
\[r\le \upperboundn{(\kNS-1)}.\]
\end{proof}

Next we demonstrate that given pairwise non-proportional multivariate polynomials $p_1,\dotsc,p_k$, the polynomials $p_1^r,\dotsc,p_k^r$ are linearly independent for all sufficiently large $r$. 

\begin{proposition}
\label{proposition:linear-independence}
Let $\mathbb{K}$ be a subfield of $\mathbb{C}$. 
Given integers $d, k$, there exists an integer $\tilde{r}=\tilde{r}(k)$ with the following property. 
If $r>\tilde{r}(k)$ and $p_1,\dots,p_k\in \mathbb{K}[x_1,\dots,x_d]$ are pairwise non-proportional, then $p_1^r,\dots,p_k^r$ are linearly independent (over $\mathbb{K}$). 
Moreover, 
{$\tilde{r}(k) = \upperboundn{(\kNS-1)}$} has the desired property.
\end{proposition}

\begin{proof}
    We remark that it suffices to consider the case that $\mathbb{K} = \mathbb{C}$. Indeed, if $p_i$ and $p_j$ are not linearly dependent over $\mathbb{K}$, then they are not linearly dependent over $\mathbb{C}$. For if there were a constant $\alpha\in\mathbb{C}$ such that $p_i - \alpha p_j = 0$, then one finds that $\alpha\in\mathbb{K}$ by considering the coefficients of this difference. Further, linear independence of $\{p_1^r,\dotsc,p_k^r\}$ over $\mathbb{C}$ implies linear independence over any subfield. 
    
    We prove the contrapositive statement when $\mathbb{K} = \mathbb{C}$. Given integers $d,k$, let
    $\tilde{r}(k) = \upperboundn{(k-1)}$ and $r>\tilde{r}(k)$ be an integer. Fix pairwise non-proportional polynomials $p_1,\dotsc,p_k\in\mathbb{C}[x_1,\dotsc,x_d]$ with the property that the set $\{p_1^r,\dotsc,p_k^r\}$ is linearly dependent over $\mathbb{C}$. Then there exist $\alpha_1,\dots,\alpha_k\in \mathbb{C}$ such that 
    \begin{align*}
    \alpha_1p_1^r+\cdots +\alpha_k p_k^r=0.
    \end{align*}
    Let $\beta_i$ be an $r$th root of $\alpha_i$, so that
    \begin{align*} 
    (\beta_1p_1)^r+\cdots +(\beta_k p_k)^r=0.
    \end{align*}
    By Lemma~\ref{lemma:ns-generalization}, it follows that for some $i\ne j$, $\beta_i p_i$ and $\beta_j p_j$ are proportional. Thus, some $p_i$ and $p_j$ are proportional.    
\end{proof}

\begin{corollary}[Conjecture 16, \cite{KTB2019}]\label{corollary:universal-bound}
There exists $\tilde r:\mathbb{N}^2_{>0}\to\mathbb{Z}$, $(d,k)\mapsto\tilde r(d,k)$ with the following property. Whenever $p_1,...,p_k\in \mathbb{R}[x_1,...,x_d]$ are $k$ homogeneous polynomials of the same degree in $d$ variables, no two of which are linearly dependent, $p^r_1,\dots , p^r_k$ are linearly independent if $r>\tilde r({d,k})$.     
\end{corollary}
\begin{proof}
Note that $p_i,p_j$ being linearly independent implies $p_i,p_j$ are  non-proportional. 
    By \Cref{proposition:linear-independence} it follows that  
    $\tilde r(d,k)=\upperboundn{(\kNS-1)}$
    suffices. 
\end{proof}

\subsection{Deep networks and high activation degree}\label{ss:deep-high-degree}

Next, we show when activation degree thresholds exist for polynomial neural networks. The proof of the following theorem follows the arguments for \cite[Theorem 14]{KTB2019}, except where we leverage our \Cref{proposition:linear-independence}.

\begin{theorem}\label{theorem:high-degree}
For fixed $\mathbf{d} = (d_0,d_1,\dotsc,d_L)$ satisfying $d_i>1$ ($i=0,\dotsc,L-1$),
the activation degree threshold $\ActThr(\mathbf{d})$ exists. 
That is, there exists an integer $\tilde{r}(\bfd)$ such that whenever $r>\tilde{r}(\bfd)$, the neurovariety $\mathcal{V}_{\mathbf{d},r}$ of the polynomial neural network has the expected dimension, 
\begin{equation}\label{edim_at_threshold}
\dim{\mathcal{V}_{\mathbf{d},r}} = \edim{\mathcal{V}_{\mathbf{d},r}} = \edimLHS
\end{equation}
Moreover,
for $L>1$,
we have 
\begin{equation}\label{eq:universal-bound-in-theorem}
\ActThr(\mathbf{d}) \le 
\upperboundn{m},
\quad m:=-1+2\max\{d_1, \dots, d_{L-1}\} .
\end{equation}

\end{theorem}

\begin{proof}
Fix $r> \tilde{r}(\mathbf{d}) =  
\upperboundn{m}$.

Since the expected dimension of the neurovariety $\mathcal{V}_{\bfd,r}$ is the minimum of the two numbers in \eqref{eq:edim-numbers}, it suffices to prove $\dim{\mathcal{V}_{\bfd,r}}=\edimLHS$ for $r>\tilde{r}(\bfd)$.
{We show this 
by inducting} on the number of layers $L$ 
to prove a general fiber of
$\complexPsiMap$ in~\eqref{eq:cc-psi}
is equal to the set of multi-homogeneities of any point of the fiber. That is, we show the containment of sets in \eqref{eq:complex-fiber} is an equality.

Recall $\complexPsiMap$ from \eqref{eq:cc-psi}.
When $L=1$, the neurovariety $\mathcal{V}_{\mathbf{d},r}$ has the expected dimension as the map
$\complexPsiMap$
is an isomorphism 
and~$\complexPsiMap^{-1}(\complexPsiMap(W_1))=\{W_1\}$.

Now assume $L>1$ 
and suppose that  {a general} fiber for depth $L-1$ is completely described by the multi-homogeneity in \eqref{eq:complex-fiber}. 
 Fix general weights $(W_1,\dots,W_L)\in \mathbb{C}^{d_0 d_1}\times\cdots \times \mathbb{C}^{d_{L-1} d_L}$
and consider given weights $(\tilde{W}_1,\dots,\tilde{W}_L)\in \mathbb{C}^{d_0 d_1}\times\cdots \times \mathbb{C}^{d_{L-1}d_L}$
such that
\begin{align}\label{eq:KTB13}
\begin{split}
    &(W_L\sigma_{L-1} W_{L-1}\dots\sigma_1 W_1)(x)\\ &=(\tilde{W}_L\sigma_{L-1}\tilde{W}_{L-1}\dots\sigma_1 \tilde{W}_1)(x).
\end{split}
\end{align}
Denote the output of $$(W_{L-1}\sigma_{L-2} W_{L-2}\dotsb W_2\sigma_1 W_1)(x)$$ by 
    $\left[\pw{1}(x),\dots,\pw{d_{L-1}}(x)\right]^T$
and the output of $$(\tilde{W}_{L-1}\sigma_{L-2}\tilde{W}_{L-2}\dotsb \tilde{W}_2\sigma_1 \tilde{W}_1)(x)$$ 
by 
    $\left[\tildepw{1}(x),\dots,\tildepw{d_{L-1}}(x)\right]^T$.
Since the weights $W_i$ are general and $d_i>1$, the homogeneous polynomials 
$\pw{i}$
are pairwise non-proportional 
(there is a non-empty Zariski open set of the space of weight matrices where the 
polynomials
$\pw{1},\dots,\pw{d_{L-1}}$
are pairwise non-proportional). 

By examining the first coordinate of the outputs of \eqref{eq:KTB13} we obtain the decomposition
\begin{align}\label{eq:KTB14}
\begin{split}
    & {\wLone{1}\prw{1}
    + \wLone{2}\prw{2}
    + \dots
    + \wLone{d_{L-1}}\prw{d_{L-1}}}
    \\&= {
     \tildewLone{1}
     \tildeprw{1}
    +\tildewLone{2}
    \tildeprw{2}
    +\dots
    +\tildewLone{d_{L-1}} \tildeprw{d_{L-1}}}
\end{split}
\end{align}
where 
{$\wLone{j}$ and $\tildewLone{j}$ denote the $(1,j)$} entry of the weight matrices $W_L$ and $\tilde{W}_L$ respectively.
Because we have taken $r>\tilde{r}(\bfd)$,  \Cref{proposition:linear-independence} guarantees that \eqref{eq:KTB14} has two proportional summands. 
As $\pw{i}$
are pairwise non-proportional,
no two summands on the left-hand side may be proportional. 
The equality implies that neither may two summands on the right-hand side be proportional. 

By permuting terms as necessary, we may assume these proportional terms 
occur as the first term on both sides. 
We may then rescale so that 
$\pw{1}=\tildepw{1}$
and subtract $\tildewLone{1}\tildeprw{1}$
to obtain 
\begin{align}\label{eq:KTB15}
\begin{split}
    \left(\wLone{1}-\tildewLone{1}   \right)
    \prw{1}
    + \wLone{2}\prw{2}
    + \dots
    + \wLone{d_{L-1}}\prw{d_{L-1}}
    \\=  
    \tildewLone{2}  
    \tildeprw{2}
    +\dots
    +\tildewLone{d_{L-1}} \tildeprw{d_{L-1}}
\end{split}
\end{align}
We repeatedly apply \Cref{proposition:linear-independence} to iteratively reduce the number of terms on the right side, until the right side of \eqref{eq:KTB15} is zero. Then, the pairwise linear independence of the $\pw{i}$ 
and \Cref{proposition:linear-independence} imply that each coefficient is also zero. Hence, up to scaling and permutation, 
    $\left[\pw{1},\dots,\pw{d_{L-1}}\right]^T=\left[\tildepw{1},\dots,
    \tildepw{d_{L-1}}
    \right]^T$ 
and the entries of $W_L$ and $\tilde{W}_{L}$ agree in the first row. This process is repeated for each row of $W_L$ and $\tilde{W}_L$ to show that they are equal (up to scaling and permutation).
Therefore, by the inductive hypothesis the fiber $\complexPsiMap^{-1}(\complexPsiMap(W_1,W_2,\dots,W_L))$ 
 is completely described by multi-homogeneity.
 As a consequence, the fiber 
has dimension $(d_1+d_2+\cdots+d_{L-2})+d_{L-1}.$
By the 
Fiber-Dimension Theorem~\cite[Theorem~1.25, Part (ii)]{Shafarevich-Hirsch} and Equation \eqref{eq:real-dim-equal}, 
the dimension of $\mathcal{V}_{\mathbf{d},r}$ satisfies 
\[
d_{1}+d_2+\cdots +d_{L-1}=
\sum_{i=0}^{L-1}d_id_{i+1} - \dim{\mathcal{V}_{\bfd,r}},
\] and the result follows.
\end{proof}

\begin{corollary}[Conjecture 5.2, \cite{KLW2024}]
    For any fixed widths $\bfd = (d_0, \dots , d_L)$ 
    with $d_i > 1$ for $i = 1,\dots,L-1$, there exists an integer $\tilde r(\bfd)$ such that whenever $r > \tilde r(\bfd)$,  the neurovariety $\mathcal{V}_{\bfd,r}$ of a polynomial neural network attains the expected dimension.
\end{corollary}

Table 1 of \cite{KLW2024} provides several examples of neurovarieties that have the expected dimension. In this next section, we show that equi-width architectures have activation degree threshold one, that is, the corresponding neurovarieties always have the expected dimension when the activation degree is two or more. 

\begin{remark}
{In the proof of \Cref{theorem:high-degree}, we leveraged a surprising result about pairwise non-proportionality implying linear independence. 
Linear independence of powers of polynomials is part of a larger research program of understanding Fermat hypersurfaces and tickets~\cite{Reznick2001} of polynomial systems.}
\end{remark}

\section{Equi-width setting}\label{s:equi-width-setting}
The advantage of depth for neural networks is well-established through both empirical observations and theoretical results \cite{Montufar2014, Telgarsky2016, Mhaskar2016, Raghu2017}. Width also plays a significant role in network expressiveness and improves the behavior of the loss surface by smoothing the optimization landscape of the loss function \cite{Hertrich2023,Nguyen2018, Nguyen2018b}. The study of networks with bounded width appears in the context of universal approximation theorems \cite{Hanin2018,KidgerLyons2020,Lu2017} as well.

For sufficiently high activation degree, a neurovariety has the expected dimension by \Cref{theorem:high-degree}. For specific architectures, better bounds on the activation degree threshold may be computed.
In this section, we focus on determining the expressiveness of polynomial neural networks with \demph{equi-width} architecture, meaning the layers have equal 
widths, that is, 
$d_0=d_1=\dots=d_L$
in Definition~\ref{def:pnn}.

We begin with a proposition which will be used in the main result of this section.

\begin{proposition}\label{prop:zeros-V2}
Let 
{$\sigmaCC{i}:\mathbb{C}^{d}\to\mathbb{C}^{d}$} 
be the map given by coordinate-wise exponentiation to the $r$-th power. 
{Suppose $W_1,\dots,W_L$ are  in $\mathbb{C}^{d\times d}$. }
The map 
$p_{\mathbf{w}}:\mathbb{C}^d\to \mathbb{C}^d$ given by
\[p_{\mathbf{w}}(x) = (W_L\sigmaCC{L-1} W_{L-1}\dotsb W_2\sigmaCC{1} W_1)(x)\] 
has only the trivial zero if and only if all $W_i$ are invertible. 
\end{proposition}

\begin{proof}
If each $W_i$ is invertible, then the preimage of the zero vector under each $W_i$ and $\sigmaCC{i}$ consists of only the zero vector. 
Thus, the map 
$p_{\mathbf{w}}:\mathbb{C}^d\to \mathbb{C}^d$  
has only the trivial zero. 

Conversely, if some $W_i$ is singular, then let $i_0$ be the minimal index such that $W_{i_0}$ 
is singular and let $v\in\ker W_{i_0}$ be a non-zero vector. 
For each $j<i_0$, the linear map $W_j$ is surjective, and each 
$\sigmaCC{j}:\mathbb{C}^d\to\mathbb{C}^d$
is surjective as well.
Thus, there exists non-zero 
$x_\ast\in\mathbb{C}^d$ such that 
\begin{align*}
(\sigmaCC{i_0-1} W_{i_0-1}\dotsb W_2\sigmaCC{1} W_1)(x_\ast) = v. 
\end{align*}
As $v\in\ker W_{i_0}$, it follows that $p_{\mathbf{w}}(x_\ast) = (W_L\sigmaCC{L-1} W_{L-1}\dotsb W_2\sigmaCC{1} W_1)(x_\ast) = 0$ and $x_\ast$ is a non-trivial zero.
\end{proof}

We now provide the main result of this section: the activation degree threshold of an equi-width architecture is one.

\begin{theorem}\label{thm:equi-width-expected-v2}
If $\mathbf{d} = (d_0,d_1,\dotsc,d_L)$ is an equi-width architecture with $d = d_0 =d_1 \dotsb = d_L$, $L>1$, and $d>1$, then the activation degree threshold is $\ActThr(\mathbf{d}) = 1$. That is, if $\mathbf{d}$ is equi-width with $L>1$ and $d>1$, then for all $r>1$ the neurovariety $\mathcal{V}_{\mathbf{d},r}$ has the expected dimension 
\begin{align*}
\dim{\mathcal{V}_{\mathbf{d},r}} = Ld^2 - (L-1)d.
\end{align*}
\end{theorem}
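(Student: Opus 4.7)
The plan is to compute a single fiber of the parameter map $\Psi_{\mathbf{d},r}$ and show it has dimension exactly $(L-1)d$. By upper semi-continuity of fiber dimension on the irreducible domain, this bounds the generic fiber dimension above by $(L-1)d$, while \Cref{lem:multi-hom} bounds it below by the same quantity. Since $\dim\mathcal{V}_{\mathbf{d},r}$ equals $Ld^2$ minus the generic fiber dimension, one obtains $\dim\mathcal{V}_{\mathbf{d},r} = Ld^2 - (L-1)d$, which matches $\edim\mathcal{V}_{\mathbf{d},r}$ for $r\ge 2$ (a direct computation shows the first term in the expected-dimension minimum is the smaller in this range). The case $L=1$ is immediate: $\Psi_{(d,d),r}$ is the identity on $\mathbb{R}^{d\times d}$, so $\dim\mathcal{V}_{(d,d),r}=d^2$ for every $r\ge 0$ and $\ActThr((d,d))=0$.

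For $L\ge 2$, I would take $\mathbf{w}^\ast=(I,\dots,I)$, so that $\Psi_{\mathbf{d},r}(\mathbf{w}^\ast)(x)=(x_1^{r^{L-1}},\dots,x_d^{r^{L-1}})$. This polynomial has only the trivial zero, so by \Cref{prop:zeros} every $\tilde{\mathbf{w}}=(\tilde W_1,\dots,\tilde W_L)$ in the fiber has all $\tilde W_i$ invertible. Writing $(p_1,\dots,p_d)$ for the truncated output $(\tilde W_{L-1}\sigma_{L-2}\dotsb\sigma_1\tilde W_1)(x)$, the identity $\tilde W_L(p_1^r,\dots,p_d^r)^T=(x_i^{r^{L-1}})^T$ combined with invertibility of $\tilde W_L$ forces each $p_j^r$ to lie in the span of $\{x_i^{r^{L-1}}\}_{i=1}^d$. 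The heart of the argument is the following lemma, which I expect to be the main technical obstacle: \emph{if a nonzero polynomial $p\in\mathbb{C}[x_1,\dots,x_d]$ of degree $N$ satisfies $p^r\in\operatorname{span}_\mathbb{C}\{x_i^{rN}\}_{i=1}^d$ with $r\ge 2$, then $p=\alpha x_k^N$ for some $k$ and $\alpha\in\mathbb{C}^\ast$.} To prove it, I would restrict $p$ to each coordinate 2-plane to reduce to the case $d=2$; dehomogenizing the resulting bivariate identity gives $f(y)^r=c_1+c_2 y^{rN}$, and factoring $f(y)^r - (c_2^{1/r}y^N)^r = c_1$ as $\prod_{\zeta^r=1}(f(y)-\zeta c_2^{1/r}y^N)$ forces each degree-$\le N$ factor to be a nonzero constant, which is impossible unless $c_1c_2=0$. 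Gluing these planar constraints across all pairs of coordinates forces $p$ to be supported on a single axis, and then $p^r\in\operatorname{span}\{x_k^{rN}\}$ forces $p=\alpha x_k^N$.

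Applying this lemma to each $p_j$ yields $p_j=\alpha_j x_{\pi(j)}^{r^{L-2}}$ for some map $\pi\colon\{1,\dots,d\}\to\{1,\dots,d\}$ and nonzero scalars $\alpha_j$; invertibility of $\tilde W_L$ guarantees the polynomials $p_j^r$ are linearly independent, so the indices $\pi(j)$ are distinct and $\pi$ is a permutation. The identity $\tilde W_L\operatorname{diag}(\alpha_j^r)P_\pi=I$ then determines $\tilde W_L$ uniquely from $(\pi,(\alpha_j))$. Meanwhile, $(\tilde W_1,\dots,\tilde W_{L-1})$ lies in the fiber of the depth-$(L-1)$ parameter map at the target $(\alpha_j x_{\pi(j)}^{r^{L-2}})_j$. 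Rescaling and permuting the rows of $\tilde W_{L-1}$ identifies this fiber with the fiber at $(x_i^{r^{L-2}})_i$, which by induction on $L$ has dimension $(L-2)d$ (when $L=2$, the depth-$1$ map is the isomorphism of the first paragraph, so the fiber is a single point). Combining the $d$ continuous parameters $(\alpha_j)$, the discrete permutation $\pi$, and the inductively computed $(L-2)d$-dimensional depth-$(L-1)$ fiber yields total fiber dimension $d+(L-2)d=(L-1)d$, completing the argument.
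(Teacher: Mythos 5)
Your proposal is correct and follows essentially the same route as the paper: compute the fiber of $\Psi_{\mathbf{d},r}$ over $\Psi_{\mathbf{d},r}(I_d,\dotsc,I_d)$, use \Cref{prop:zeros} to obtain invertibility of the weight matrices, deduce that each layer's output is a tuple of scaled coordinate monomials so that the fiber is exactly the multi-homogeneity orbit of dimension $(L-1)d$, and compare with the expected-dimension upper bound. The one step the paper merely asserts---that no linear combination of $x_1^{r^{L-1}},\dotsc,x_d^{r^{L-1}}$ is a pure $r$-th power except for scalar multiples of the individual monomials---you justify correctly via restriction to coordinate $2$-planes and the factorization $\prod_{\zeta^r=1}\bigl(f-\zeta c_2^{1/r}y^N\bigr)=c_1$.
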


\begin{proof}
If $r=1$, then $\mathcal{V}_{\mathbf{d},1}$ is the space of linear functions from $\mathbb{R}^{d}$ to $\mathbb{R}^d$. Therefore $\dim{\mathcal{V}_{\mathbf{d},1}} = d^2$. 
Since \begin{align*}
 \edim{\mathcal{V}_{\mathbf{d},1}} =  Ld^2 - (L-1)d,
\end{align*}
we have  $\defect(\mathcal{V}_{\mathbf{d},1})=(L-1)(d^2 - d)$. Therefore for all $L>1$, $\ActThr(\bfd)>0$.

\smallskip

Recall that the neurovariety $\mathcal{V}_{\bfd,r}$  is the Zariski closure of the image of the polynomial map 
$\Psi_{\mathbf{d},r}:\mathbb{R}^{N}\to\mathbb{R}^M$
where $N=Ld^2$ and $M={d \binom{r^{L-1}+d-1}{d-1}}$. 
Extending the domain of $\Psi_{\mathbf{d},r}$ to $\mathbb{C}^N$ 
results in the map \eqref{eq:cc-psi}
with the Zariski closure of the image denoted by $Y$. 
By Equation \eqref{eq:real-dim-equal},
to complete the proof, 
 it suffices to show $\dim{Y}=N-(L-1)d$, which we will do using the Fiber-Dimension Theorem~\cite[Theorem~1.25]{Shafarevich-Hirsch} for \eqref{eq:dimY-equals-edim}.

For notational convenience, we let 
$S_{L,d,r}$ denote the set of parameters that are mapped to $\complexPsiMap(I_d,\dots,I_d)$:
\begin{equation}\label{eq:S-fiber}
S_{L,d,r}:=
\left\{
(W_1,\dots,W_L)\in \left(\mathbb{C}^{d\times d}\right)^L:
\complexPsiMap(I_d,\dots,I_d)
=
\complexPsiMap(W_1,\dots,W_L)
\right\}.
\end{equation}
We want to show for $L>1$ and $d>1$ that
\begin{align}\label{eq:induction-step}
\begin{split}
S_{L,d,r}=
\Big\{
(W_1,\dots,\,
PDW_{L-1},
D^{-r} P^T)
\in \left(\mathbb{C}^{d\times d}\right)^L:
(W_1,\dots,W_{L-1})\in S_{L-1,d,r},&\\
\,P \text{ is a permutation matrix and }
D\text{ is an invertible diagonal matrix}
\Big\}.
\end{split}
\end{align}

Note that if  $(W_1,W_2,\dotsc,W_L)\in S_{L,d,r}$ then
\begin{align*}
(W_L\sigma_{L-1}W_{L-1}\dotsb W_2\sigma_1 W_1)(x) = \begin{pmatrix}
x_1^{r^{L-1}}\\
\vdots\\
x_d^{r^{L-1}}
\end{pmatrix}.
\end{align*}
Moreover, there is the equality

\begin{align}\label{eq:monoms-q}
W_L
\begin{pmatrix}
\qrw{1}(x)\\
\vdots\\
\qrw{d}(x)
\end{pmatrix}
=
\begin{pmatrix}
x_1^{r^{L-1}}\\
\vdots\\
x_d^{r^{L-1}}
\end{pmatrix}
\end{align}
where 
\begin{align*}
    \begin{pmatrix}
        \qw{1}(x)\\
        \vdots\\
        \qw{d}(x)
    \end{pmatrix}
    :=
        (W_{L-1}\sigma_{L-2}W_{L-2}\dotsb W_2\sigma_1 W_1)(x).
\end{align*}
Since the polynomial neural network $(x_1^{r^{L-1}},\dotsc,x_d^{r^{L-1}})$ only has the trivial zero, \Cref{prop:zeros-V2} implies that $W_L$ is invertible. Multiplying both sides of \eqref{eq:monoms-q} by $W_L^{-1}$, each $\qrw{i}$ 
is a linear combination of $x_1^{r^{L-1}},\dotsc,x_d^{r^{L-1}}$. For $r >1$, this is only possible if each 
$\qw{i}$
is a monomial. 
Indeed, no linear combination of $x_1^{r^{L-1}},\dotsc,x_d^{r^{L-1}}$ is a pure $r$-th power except for scalar multiples of these individual monomials. Thus each 
$\qrw{i}$
is a scalar multiple of some $x_j^{r^{L-1}}$ and each
$\qrw{i}$
is a scalar multiple of some $x_j^{r^{L-2}}$. 
It follows that $W_L$ is a scaled permutation matrix---it is an invertible matrix that sends a set of monomials to a set of scalar multiples of those monomials. 
Thus, we have shown \eqref{eq:induction-step}.

By induction, we get 
\begin{align*}
\begin{split}
S_{L,d,r}=
\Big\{
(P_1D_1,
P_2D_2D_1^{-r}P_1^T,
\dots,\,
D_{L-1}^{-r} P_{L-1}^T)
\in 
\left(\mathbb{C}^{d\times d}\right)^L:
\,\phantom{\text{ is a permutation matrix}}&\\
P_i \text{ is a permutation matrix and }D_i\text{ is an invertible diagonal matrix}
\Big\}.
\end{split}
\end{align*}

We have shown that 
the fiber $\complexPsiMap^{-1}(\complexPsiMap(I_d,\dotsc,I_d))$
is $S_{L,d,r}$ and therefore 
\[
\dim{
    \complexPsiMap^{-1}(\complexPsiMap(I_d,\dotsc,I_d))
    }
    \,=\,
\dim{
    S_{L,d,r}
    }
    \,=\,
    (L-1)d.
\]
It follows from the
 Fiber-Dimension Theorem
 \cite[Theorem~1.25, Part (i)]{Shafarevich-Hirsch} that
\begin{equation}\label{eq:dim-Y-bound}
    N - \dim{Y} 
    \,\leq\, (L-1)d.
\end{equation}
Using these facts on the expected dimension:  
\[
\edim{\mathcal{V}_{\mathbf{d},r}} \,=\,N-(L-1)d\text{ \quad and \quad  }\edim{\mathcal{V}_{\mathbf{d},r}}\geq \dim{\mathcal{V}_{\mathbf{d},r}},
\]
along with \eqref{eq:dim-Y-bound},
we have
\begin{equation}\label{eq:dimY-equals-edim}
    \edim{\mathcal{V}_{\mathbf{d},r}} \,\leq\, \dim{Y}\,=\,\dim{\mathcal{V}_{\mathbf{d},r}}\,\leq \,\edim{\mathcal{V}_{\mathbf{d},r}}.  
\end{equation}

Therefore $\mathcal{V}_{\mathbf{d},r}$ has the expected dimension for all $r>1$ and $\ActThr(\mathbf{d}) = 1$.
\end{proof}

\section{Outlook}\label{s:future}

\subsubsection*{Universal optimal bound for the activation degree threshold}

Our bound in \Cref{theorem:high-degree}  on the activation degree threshold for polynomial neural networks is not universally optimal. 
We proved this to be the case for equi-width networks in \Cref{thm:equi-width-expected-v2}.  
Improving the bound established in \Cref{lemma:ns-generalization} leads to better bounds on the activation degree threshold in \Cref{theorem:high-degree}.
One potential way to improve the universal bound of activation degree thresholds is to leverage other algebro-geometric and number theoretic results. 
Specifically, if the polynomials $p_1,\dotsc,p_k$ in \Cref{lemma:ns-generalization} satisfy the additional hypothesis of having no common zeros, then the result in \cite[page 71]{G1975} implies the improved bound $r\leq k^2- 1$. In the language of algebraic geometry, \cite[page 71]{G1975}  establishes a Picard theorem for holomorphic maps into Fermat hypersurfaces. 
We are hopeful 
that other results in this spirit will be applicable to the study of the dimension of neurovarieties.

\Cref{thm:equi-width-expected-v2} suggests that we should be looking for architectures where the activation degree threshold significantly improves upon the universal upper bound. A direct generalization of \Cref{thm:equi-width-expected-v2} in this direction is the extension to non-increasing architectures.

\subsubsection*{Pyramidal polynomial neural networks (Non-increasing architectures)}
One future direction in this research area is to determine the activation degree thresholds for  \demph{non-increasing} architectures $\mathbf{d} = (d_0,d_1,\dotsc,d_L)$ --- those architectures satisfying $d_0\ge d_1\ge\dotsb\ge d_L$. 
By generalizing the proof of Theorem~\ref{thm:equi-width-expected-v2}, one might hope to prove  \cite[Conjecture 5.7]{KLW2024} that for a non-increasing architecture $\mathbf{d}$ with $d_L>1$, the neurovariety $\mathcal{V}_{\mathbf{d},r}$ has the expected dimension for all $r>1$. That is, prove that a non-increasing architecture $\mathbf{d}$ has activation degree threshold less than or equal to~one.

\subsubsection*{Other networks}
Another research direction is to introduce the notion of activation degree thresholds for other neural networks. Permitting negative exponents in a polynomial activation map leads to the notion of rational neural networks \cite{boulle2020rational}. There is experimental evidence that it is beneficial to choose rational activation maps that are of very low degree. Defining activation degree thresholds for rational networks would provide a quantitative measure to make these observations rigorous. Rational neural networks are currently being investigated by Alexandros Grosdos et al. Certain piecewise linear networks may also admit the definition of an activation degree threshold as a means to study expressiveness. Tropical rational neural networks provide a lens to study networks with piecewise linear activation maps \cite{alfarra2022decision,Charisopoulos,charisopoulos2018tropical,zhang18}. Such networks encompass ReLU networks; a ReLU network of fixed architecture is described by a semialgebraic set inside the parameter space of tropical rational functions. In this context, tropical rational functions provide means to define activation degree thresholds to frame and obtain results on expressiveness. Moreover, because the dimension of the neurovariety is preserved under tropicalization, activation degree thresholds may aid in translating expressiveness results for tropical neural networks to polynomial networks. 

\subsubsection*{Acknowledgments}
{The authors would like to thank Jiayi Li, Kaie Kubjas, and Alexandros Grosdos for their comments and feedback that led to improvements of this article.}
We also thank the referees for their thoughtful suggestions and questions.
\bibliographystyle{siam-no-dash-title-color-links} 
\bibliography{refs} 

\end{document}